\definecolor{darker}{rgb}{0,0.15,0.8}
\title{Deep Learning Scaling is Predictable, Empirically}
\author{
  Joel Hestness, Sharan Narang, Newsha Ardalani, Gregory Diamos, Heewoo Jun, \\
  \And Hassan Kianinejad, Md. Mostofa Ali Patwary, Yang Yang, Yanqi Zhou\\
  \\
  \texttt{\{joel,sharan,ardalaninewsha,gregdiamos,junheewoo,hassankianinejad,}\\
  \texttt{patwarymostofa,yangyang62,zhouyanqi\}@baidu.com}\\
  \\
  Baidu Research \\
}
\begin{document}

\maketitle

\begin{abstract}

Deep learning (DL) creates impactful advances following a virtuous recipe: model architecture search, creating large training data sets, and scaling computation. It is widely believed that growing training sets and models should improve accuracy and result in better products. As DL application domains grow, we would like a deeper understanding of the relationships between training set size, computational scale, and model accuracy improvements to advance the state-of-the-art.

This paper presents a large scale empirical characterization of generalization error and model size growth as training sets grow. We introduce a methodology for this measurement and test four machine learning domains: machine translation, language modeling, image processing, and speech recognition. Our empirical results show power-law generalization error scaling across a breadth of factors, resulting in power-law exponents---the "steepness" of the learning curve---yet to be explained by theoretical work. Further, model improvements only shift the error but do \textit{not} appear to affect the power-law exponent. We also show that model size scales sublinearly with data size. These scaling relationships have significant implications on deep learning research, practice, and systems. They can assist model debugging, setting accuracy targets, and decisions about data set growth. They can also guide computing system design and underscore the importance of continued computational scaling.


\end{abstract}

\section{Introduction}
\label{sec:introduction}

The deep learning (DL) community has created impactful advances across diverse application domains by following a straightforward recipe: search for improved model architectures, create large training data sets, and scale computation. This recipe helps improve user experience and product adoption, which drives increased DL development investments in existing and emerging application domains. As data sets grow and new application domains proliferate, we would like to focus our development efforts through a deeper understanding of how the recipe parts coordinate to drive the most valuable product improvements.

Breaking down the recipe, we note challenges in the search for improved model architectures. Model search can create important new insights and publications, which, in turn, improve products that use the novel models. However, model architecture advances often depend on unreliable epiphany; advances usually involve complex or creative reframing of the modeling problem, and research often involves large-scale hyperparameter search with some serendipity.

As a lower-risk complement to model architecture search, it is important that we investigate the other two recipe parts---creating large training sets and scaling computation---where we may have more control over progress. It is widely believed that simply using more data to train larger models should improve accuracy. We would like to better analyze the relationships between training set scale, computational scale, and model accuracy improvements. In particular, accurately predicting generalization error scaling with training set size would provide a powerful tool for estimating the costs---in data and compute requirements---for advancing state-of-the-art (SOTA).

Although prior works analyze sample complexity requirements to reach a desired generalization error, they appear insufficient to accurately predict error scaling for real applications. Many studies theoretically predict that generalization error "learning curves" take a power-law form, $\varepsilon(m) \propto \alpha m^{\beta_g}$. Here, $\varepsilon$ is generalization error, $m$ is the number of samples in the training set, $\alpha$ is a constant property of the problem, and $\beta_g = -0.5$ or $-1$ is the scaling exponent that defines the steepness of the learning curve---how quickly a model family can learn from adding more training samples\footnote{Note: learning curves measure how much training data a model family requires to reach a particular accuracy. They are different from training and validation curves, which measure number of training iterations a model needs to learn a particular data set.}. Unfortunately, in real applications, we find empirically that $\beta_g$ usually settles between $-0.07$ and $-0.35$, exponents that are unexplained by prior theoretical work.

This paper presents the largest scale empirical characterization of learning curves to date that reveals broadly that DL generalization error does show power-law improvement, but with exponents that must be predicted empirically. We introduce a methodology to accurately predict generalization error and model size scaling with increased training set size. We use this methodology to estimate scaling relationships for six deep neural network models across four application domains: machine translation, language modeling, image classification, and speech recognition.

Our results show that power-law learning curves exist across all tested domains. Although different applications yield different power-law exponents and intercepts, these learning curves exist across a broad range of models, optimizers, regularizers, and loss functions. Improved model architectures and optimizers can improve the power-law intercept, but not the exponent; models for a single domain show the same learning curve steepness. Finally, we find that models transition from a small training set region dominated by best guessing to a region dominated by power-law scaling. With sufficiently large training sets, models will saturate in a region dominated by irreducible error (e.g., Bayes error).

Further, we describe the significant implications of predictable accuracy and model size scaling. For DL practitioners and researchers, learning curves can assist model debugging and predict the accuracy targets for improved model architectures. Our results suggest an opportunity for redoubled effort to theoretically predict or interpret learning curve exponents. Operationally, predictable learning curves can guide decision-making about whether or how to grow data sets. Finally, learning and model size curves can be used to guide system design and expansion, and they underscore the importance of continued computational scaling.

This paper is organized as follows: We begin with a review of related work measuring relationships between data set size, model size, and model capacity in Section~\ref{sec:related}. Section~\ref{sec:methodology} describes our methodology to establish such relationships for four machine learning domains, and Section~\ref{sec:results} shows those relationships for the four domains. Section~\ref{sec:discussion} discusses some implications of our scaling results.

\section{Related Work}
\label{sec:related}

Since our objective is to accurately predict generalization error and model size scaling with increased training set size, we start with a review of prior theoretical and empirical work to see if they are adequate to predict the behavior we see. Prior work investigates generalization error improvements as sample complexity increases using three approaches: theoretically bounding generalization error scaling, theoretically estimating the expected generalization error, and empirically collecting generalization error for single applications. Prior work also deeply analyzes the theoretical model capacity, suggesting the model size required to fit training data.

Unfortunately, although these prior works offer general guidance, they are not able to explain our empirical results. To the best of our knowledge, this paper is the first to empirically characterize learning curve and model size scaling trends for a broad range of application domains and models.

%
\subsection{Generalization Error Scaling with Data: Learning Curves}

We start with a survey of studies that investigate learning curves. Most of these works show power-law generalization error scaling ($\varepsilon(m) \sim \alpha m^{\beta_g}$) with exponent $\beta_g = -0.5$ or $-1$.

\textbf{Bounding Generalization Error:}
Many prior works provide theoretical bounds on the sample complexity to ensure particular generalization error. Early theoretical work defines a framework for bounding generalization, but makes weak assumptions that cause the predicted generalization error to be very loose (\cite{haussler:valiantsframework:ai:1988}). Early follow-on research tightens the bounds by relating sample complexity to generalization error through the Vapnik-Chervonenkis (VC) dimension of the target concept class (\cite{ehrenfeucht:sampcomplxlowerbd:infocomp:1989,blumer:learnvcdim:jacm:1989,haussler:rigorousbounds:machinelearning:1996}). All of these bounds show power-law relationships under certain assumptions, such as the hypothesis space must contain at least one model that can correctly fit the data or the training data size must be much larger than the capacity of the models. These assumptions are often too strict for real applications, so the bounds are usually loose or even vacuous. Recent work by \cite{dziugaite:nonvacuousbounds:icml:2017} tightens bounds for the common real application setting that model size is larger than the number of samples in the data set. However, despite the breadth of prior sample complexity bounds, we have yet to find straightforward bounds that explain our empirical results.

\textbf{Estimating Expected Generalization Error:} Prior work also evaluates the expected generalization error in certain contexts. Using statistical mechanics approaches, \cite{amari:fourlearningcurves:neuralcomp:1992} and \cite{amari:universallearningcurve:jnn:1993} show that as sample complexity grows, generalization error should decline as a power-law $\varepsilon(m) \sim \alpha m^{\beta_g}$ with $\beta_g = -0.5$, $-1$, or $-2$. These trends depend on assumptions about the problem and give the expectation across all possible data distributions. Amari and others show that similar expectations hold for certain models, such as single- and multi-layer perceptrons, and committees of networks (\cite{gyorgyi:learningarule:nnandspin:1990,seung:mechlearningexamples:physreva:1992,schwarze:mechlargecommittee:nips:1993,amari:entropicloss:neuralcomp:1993}). Appendix \ref{appendix:powerlawproof} adds to this corpus, showing that a counting model used to predict the probability of a weighted coin-flip converges with the power-law exponent of $\beta_g = -0.5$.

Despite the breadth of prior work estimating the expected generalization error in various contexts, the empirical results in this paper show yet unexplained power-law exponents between $\beta_g = -0.07$ and $-0.35$ on various real world problems. Our results suggest an opportunity for redoubled effort to theoretically justify our empirical generalization scaling trends.


\textbf{Empirical Generalization Error Scaling:}
A few prior studies empirically investigate the way generalization error scales with training data size. Using a methodology similar to the one we propose below, \cite{banko:verylargenld:acl:2001} test a language modeling problem (confusion set disambiguation) trained using subsets of a billion-word corpus of text. Their results appear to show power-law scaling of the average disambiguation validation error. In speech recognition, \cite{amodei:ds2:icml:2016} show word error rate improvement for a Deep Speech 2 model on varying sizes of training data. They use a fixed model size of 68M parameters and show power-law WER gains from increased data. \cite{sun:dataeffective:iccv:2017} show image classification accuracy improves with training data size, but curiously, they conclude that accuracy "increases logarithmically based on volume of training data size".

Although some prior works study generalization error scaling trends empirically, the community has yet to definitively conclude that power-law error scaling should exist across most DL domains.


%
\subsection{Model Capacity Required to Fit Data}

Prior studies propose various measures of model capacity based on a model's organization and parameterization, and these measures hint at the model size required to fit a training set. We expect that number of model parameters to fit a data set should follow $s(m) \propto \alpha m^{\beta_p}$, where $s(m)$ is the required model size to fit a training set of size $m$, and $\beta_p \in [0.5, 1]$.

Vapnik and Chervonenkis defined the VC dimension of a model as the cardinality of the largest set of data points that a model can shatter (\cite{vapnik:vcdim:ieeenn:1998}). Follow-on work uses data complexity measures to estimate the structure of model families that might fit the data (\cite{bartlett:rademachergaussian:jmlr:2002}). Recent work also defines bounds on the VC dimension of particular deep neural network models, including showing that recurrent neural network models have the same effective capacity if the optimization scheme is well-tuned and training runs long enough (\cite{harvey:nearlytightvcdim:jmlr:2017,dziugaite:nonvacuousbounds:icml:2017,collins:modelcaptrain:iclr:2017}). 

Prior work to empirically estimate model scaling with training set size is very sparse. The \cite{banko:verylargenld:acl:2001} confusion set disambiguation work claims that the model size required to fit the data grows "log-linearly". We estimate that their Winnow and memory-based models grow with the same power-law exponent to larger data sets, $\beta_p \approx 0.72$.

While these theoretical and empirical results offer significant insight about required model sizing, recent work has noted the need for more practical guidance (\cite{zhang:rethinkgeneral:arxiv:2017,arpit:memorizationdeep:icml:2017,smith:bayesiangeneralize:arxiv:2017,kawaguchi:dlgeneralization:arxiv:2017}). These studies show that while model capacity might explain a model's ability to memorize training examples, capacity may not adequately explain the model's ability to generalize to new examples. Rather than reason through these complexities, it is currently easier for researchers and practitioners to over-parameterize models to fit training data.

\section{Measuring Model Accuracy and Size Scaling with Training Data Size}
\label{sec:methodology}

With the general guidance from prior work in mind, we focus our attention on accurately estimating learning curves and model size scaling trends. We measure the effects of scaling data size on generalization error and model size using the following methodology. The general process is to select state-of-the-art (SOTA) models and to train "hyperparameter-reduced" versions of these models on successively larger subsets (shards) of a training set to see how the accuracy of the model grows with training set size.

First, for each of the machine learning domains, we survey recent work to find the model architectures that show SOTA generalization error on a large data set. Here, a "large data set" is a training set that could be reduced in size by 2-3 orders of magnitude and still be significant enough to perform valuable model architecture studies. We select more than one model architecture for some ML domains to compare their scaling behaviors.

\textbf{Data sets:}
Given a SOTA model architecture, $M$, and a training set, $T$, we set up experimental infrastructure as follows. First, we ensure that $T$ is randomly shuffled to maximize the likelihood that shards of $T$ will have similar data distribution to $T$. We then subdivide $T$ into shard sizes that span 2-3 orders of magnitude in steps of roughly 2$\times$ (e.g., $T_0$ is 0.1\% of $T$, $T_1$ is 0.2\%, $T_2$ is 0.4\%, etc.). We define a single validation set, $V$, which is used to score all models (even trained on different shard sizes of $T$), such that $\forall i, V \cap T_i = \emptyset$. $V$ must be sufficiently large to approximate true generalization error with low variance. We use either the validation set available with training data, or if such a validation set is not available, we use a hold-out subset of $T$ that does not overlap with any of the $T$ shards.

We ensure that the metric used to measure the size of the data set accurately represents the observable size of the training set. For example, character language models truncate time-series input sequences at a maximum length and discard the rest of the sequence from the training set. In such situations, the data set size must only count the portion of the input sequences observed by the model during a full training run.

\textbf{Model setup:}
We first replicate the SOTA results on $T$, setting hyperparameters of $M$ as described in the corresponding prior works. Next, we aim to understand the importance of model capacity to fit a training set, so we control our experiments by removing regularization schemes that might reduce the model's effective capacity (e.g., weight decay). With this simplification, we can inspect validation curves to find the smallest model size that is able to overfit each shard size of $T$. For models that achieve SOTA results while underfitting on their training set, we reduce the the training set size to a scale that the model can overfit.

We aim to find a model variant of $M$ that best fits $V$ when trained on the smallest shard, $T_0$, and to find this variant, we reduce hyperparameters of $M$ and perform a grid search. We generate a set of model candidates, $\mathcal{M}_0 = \{M_{0;0}, M_{0;1}, M_{0;2},\ldots\}$, by constraining $M$'s model capacity, changing hyperparameters such as layer count, hidden node count, etc. From this search, we find $\hat{M}_0 = arg~min_{M_{0;j} \in \mathcal{M}_0}\left(\mathcal{L}(M_{0;j},V)\right)$, which gives the best validation loss, $\mathcal{L}$, on $V$ when trained on $T_0$.

\textbf{Training procedure:}
Finally, with best-fit models defined for the smallest and largest shards of $T$, we perform a stochastic Monte Carlo grid search to find the best-fit hyperparameter-reduced models as we step through successively larger shards of $T$. Specifically, given the best-fit model size for shard $T_i$, we project forward to shard $T_{i+1}$---often increasing model sizes linearly or sublinearly in the shard size---to define the candidate set of models $\{M_{i+1;0}, M_{i+1;1}, M_{i+1;2},\ldots\}$. We train these models in search of the models that best fit the validation set. In most cases, we also search over optimization parameters such as batch sizes and learning rates, and sometimes re-run training with different random seeds to aid the search.

We report validation losses that are sums or unweighted averages over distance metrics measuring the error per model-predicted output. This loss structure appears important to the predictivity of the resulting learning curves. Depending on problem domain, error metrics include cross-entropy, $L^p$ norms, and classification error. In many cases, training optimizes a different loss function than we report as the validation loss (see Appendix \ref{appendix:domaindetail} for more details).


\section{Data Set and Model Size Scaling Relationships}
\label{sec:results}

In this section, we present empirical results showing how increasing training data size results in power-law scaling of generalization error and required model size to fit the training set for four domains: machine translation, language modeling, image classification, and speech recognition\footnote{Data presented here was collected from roughly 50 years worth of GPU time.}. These power-law relationships hold for each ML domain and across various model architectures, optimizers, and loss metrics. In many cases, we also find that model size growth with data set size grows sublinearly. Throughout this section, we specifically refer to power-law exponents for generalization error ($-0.5 \leq \beta_g < 0$ in $\varepsilon(T_i) = \alpha |T_i|^{\beta_g}$) and number of model parameters ($0.5 \leq \beta_p < 1.0$ in model size $ = \alpha|T_i|^{\beta_p}$).

%
\subsection{Neural Machine Translation}

We start our learning curve investigation with a case study in neural machine translation (NMT). Translation converts text input in one natural language to output text in another language. Relative to other DL domains, NMT has low-dimensional input and output spaces, and can be trained with large labeled data sets. Our results show learning curve character similar to theoretical predictions, though the power-law exponents are smaller (i.e., $\beta_g \approx -0.128$ rather than $-0.5$).

To test NMT, we train a SOTA sequence-to-sequence model with global attention (\cite{luong:globalattention:emnlp:2015}) on the 2016 Conference on Machine Translation (WMT’16) German-to-English data set. We use a publicly available implementation of this architecture in OpenNMT (\cite{klein:opennmt:acl:2017}). The encoder contains two layers of bidirectional LSTMs, and the decoder contains the attention layer and stack of LSTM layers. To simplify training this SOTA model, we remove ensembling and data augmentation techniques (\cite{sennrich:edinburghnmtwmt16:arxiv:2016}).

To scale model sizes, we tie LSTM input and hidden state sizes together, and change them so that the total parameter count decreases roughly linearly with data set size. We use Adam to optimize per-sequence cross-entropy loss and report the per-token classification error. We select models using the newstest2015 validation set, and we use the other newstest development sets from 2009 to 2013 for evaluation. Results presented here are with dropout rate of 0.2, though we tested without dropout and found similar learning curve exponents.

We clean and tokenize the data set using Moses (\cite{koehn:moses:jacl:2007}) as described by \cite{luong:seq2seqtutorial:tf:2017}. We use the byte-pair encoding (BPE) method described by \cite{sennrich:bpe:arxiv:2016} to build a shared word-piece vocabulary between English and German. After preprocessing, the training set includes 4.5 million training sequences with roughly 130 million tokens in each language. We uniformly randomly shuffle the training data and sample training shards as described in Section~\ref{sec:methodology}.

\begin{figure}
  \centering
  \begin{subfigure}[b]{0.5\textwidth}
    \includegraphics[width=\textwidth]{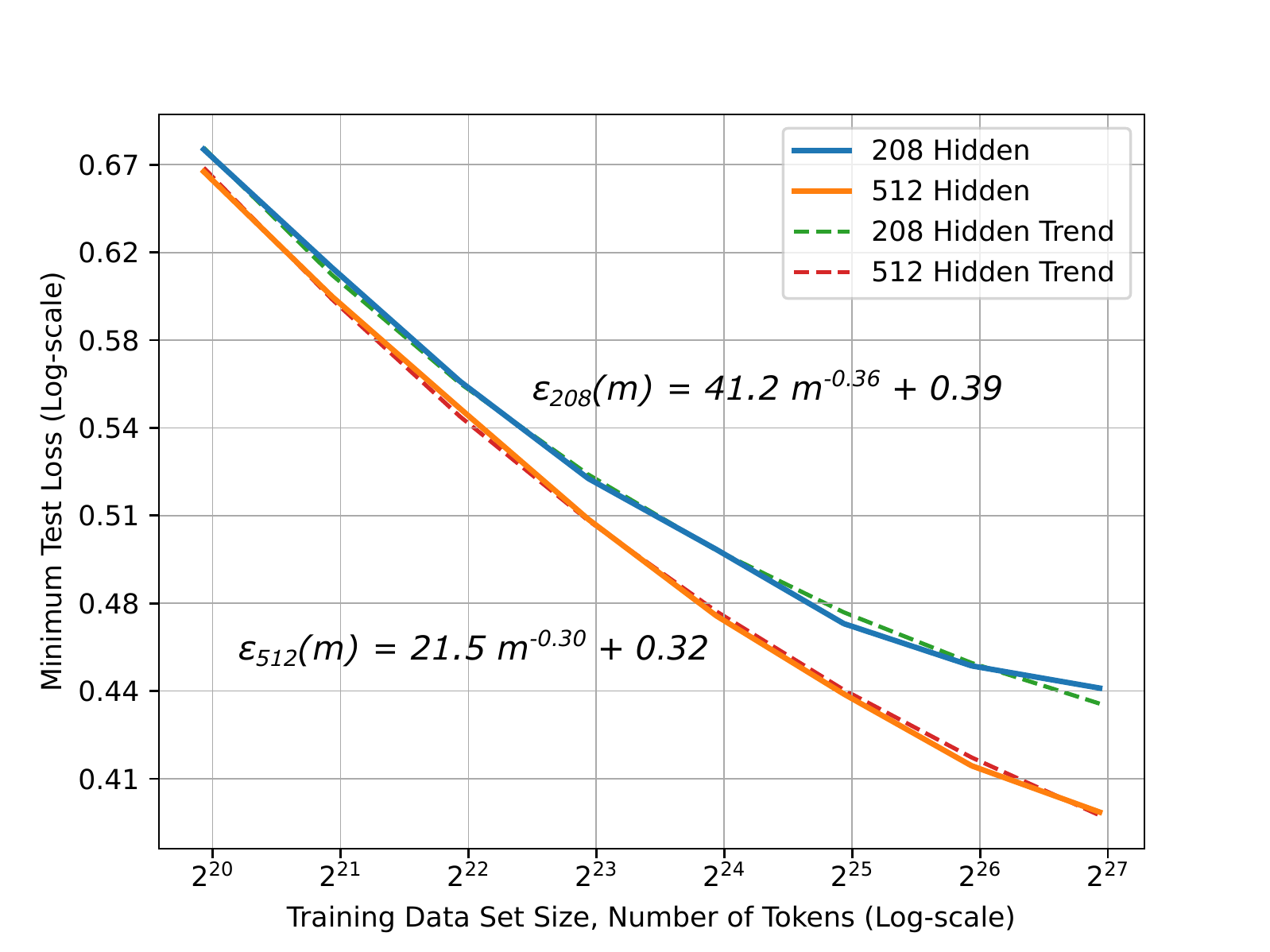}
    \label{fig:nmt_model_learn_curves}
  \end{subfigure}
  \hspace{-10pt}
  \begin{subfigure}[b]{0.5\textwidth}
    \includegraphics[width=\textwidth]{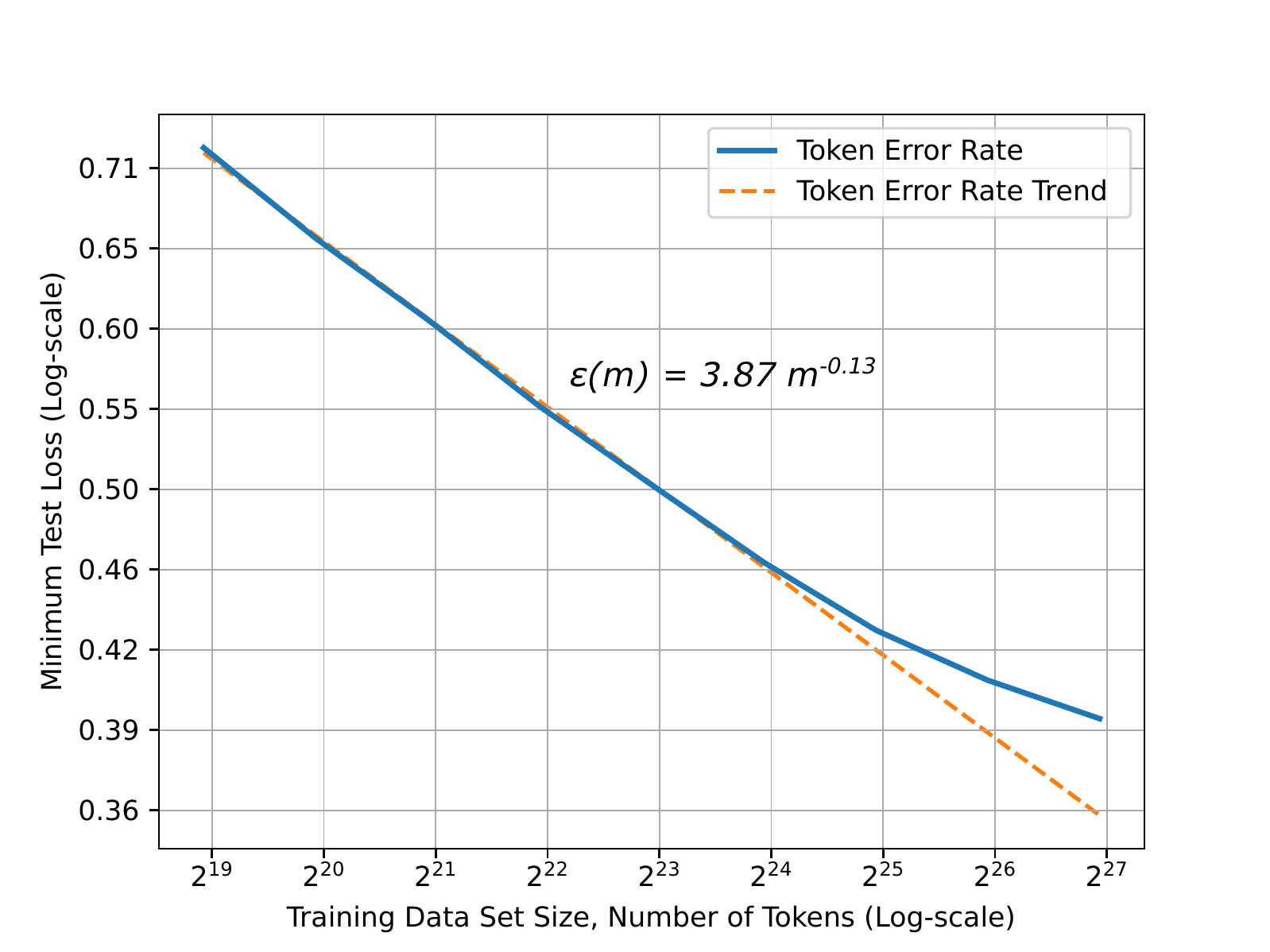}
    \label{fig:nmt_data_gen}
  \end{subfigure}
  \caption{Neural machine translation learning curves. Left: the learning curves for separate models follow $\varepsilon(m) = \alpha m^{\beta_g} + \gamma$. Right: composite learning curve of best-fit model at each data set size.}
  \label{fig:nmt_results}
\end{figure}

In our initial tests, we aim to replicate theoretical results as closely as possible. Prior theoretical work indicates that the expected classification error learning curve for a single model family (i.e., of fixed capacity) is a power-law with exponent $\beta_g = -0.5$ (\cite{amari:fourlearningcurves:neuralcomp:1992}). Further, prior work predicts that as a model runs out of capacity on larger data sets, the error should plateau, resulting in a power-law + constant, $\varepsilon(m) \sim \alpha m^{\beta_g} + \gamma$, where $\gamma$ is the error when the model family has exhausted its capacity.

Indeed, we find that learning curves for a single model family can be closely represented by a power-law + constant. However, we find that $\beta_g$ is smaller in magnitude than $-0.5$. We start by training fixed size models on each of the training shards. The left plot in Figure~\ref{fig:nmt_results} shows the learning curves for two different model sizes with 208 or 512 hidden nodes per LSTM layer (17M and 48M parameters, respectively). Learning curves with $\beta_g = -0.360$ and $-0.300$, respectively, fit the empirical results with less than $0.6\%$ relative root mean square error.

For these experiments, we use controls as close to theoretical assumptions as possible. We use the same loss function, classification error. To approximate the generalization error expectation calculations without an excessive number of training runs, we select models using the median minimum validation error across multiple training runs with separate random seeds. We cannot control for factors such as the assumed data distribution or ensure that the model family contains a model that can correctly represent the data generating function. These factors might account for a portion of the gap from theoretical to empirical $\beta_g$.

Unlike these initial tests, DL practitioners and researchers often grow model sizes as training data grows to ensure sufficient capacity. They would rather see a composite learning curve representing the best-fit model at each training set size. The right plot in Figure~\ref{fig:nmt_results} shows the composite learning curve for NMT. The best-fit results form a longer power-law region. We find that $\beta_g$ is even smaller than the single-model learning curves; if we project forward, $\beta_g$ would be approximately $-0.128$. The rest of our results aim to characterize the steepness of these composite best-fit learning curves.

We also note that as training set sizes grow, optimization becomes more difficult and models run out of capacity, so the empirical error tends away from the power-law trend. This divergence is common across domains, as we show below, and we would need to perform a more exhaustive hyperparameter search to find results closer to the existing power-law.

%
\subsection{Language Modeling}

Language models (LMs) aim to predict probability distributions for the next character, word, or other textual grams conditioned on a previous sequence of input text. LMs are very important model features for domains such as speech recognition and machine translation, helping to identify most probable sequences of grams. Similar to NMT, LMs have low-dimensional input and output spaces, and can be trained with very large labeled sets.

LM learning curves and model size scaling relationships are the most robust; word and character language models show clear and predictable power-law learning curves, and the power-law exponents tend to be small ($\beta_g \in [-0.09, -0.06]$). These small exponents indicate that current language models will require significantly more data to significantly improve accuracy. The word and character models that give the best generalization error grow sublinearly in the training set size ($\beta_p \approx 0.7$).

\subsubsection{Word Language Models}

We train LSTM-based word LMs that were early SOTA models as described in~\cite{jozefowicz:lmlimits:arxiv:2016} with some small changes. To reduce the computational requirements of the models, we restrict the vocabulary to the top 10,000 most frequent words in the Billion Word Dataset (\cite{chelba:1bw:arxiv:2013}). The networks are 2- or 4-layer LSTMs with the same number of hidden weights in each layer, and we scale the number of layer weights to modulate the model size and find the best fit model for each training shard size. We also compare LSTMs against Recurrent Highway Networks (RHNs) described in \cite{zilly:rhns:icml:2017}. Specifically, we train single-layer, depth 5 RHNs to see if the different network organizations show different generalization trends. We use a stochastic gradient descent optimizer (SGD) with per-sequence cross-entropy loss, and we report per-predicted-word average cross-entropy loss. We do not use dropout. We train the models on shards ranging from 0.1\% up to 40\% of the Billion Word Dataset.

\begin{figure}
  \centering
  \begin{subfigure}[b]{0.5\textwidth}
    \includegraphics[width=\textwidth]{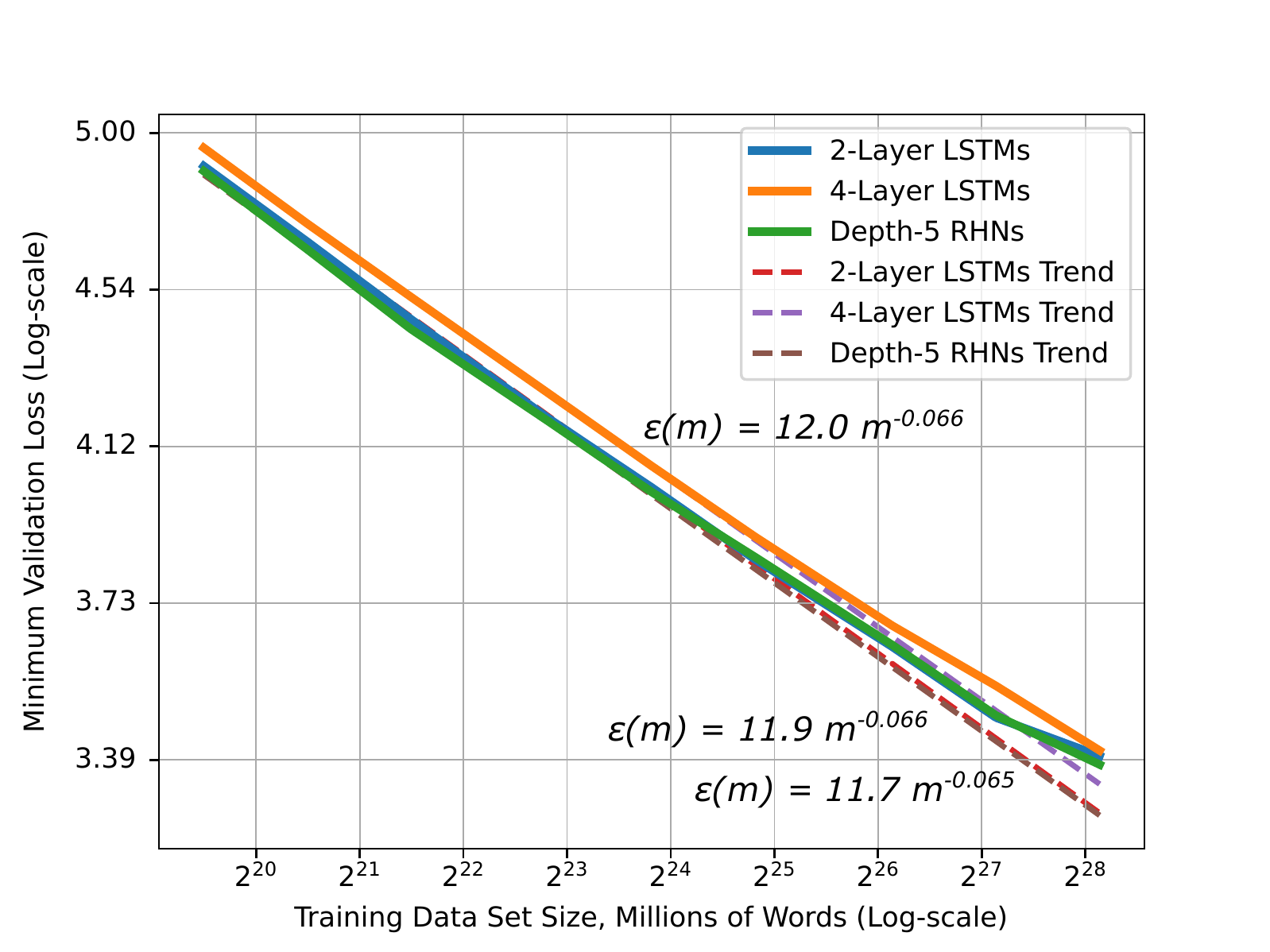}
    \label{fig:word_data_gen}
  \end{subfigure}
  \hspace{-10pt}
  \begin{subfigure}[b]{0.5\textwidth}
    \includegraphics[width=\textwidth]{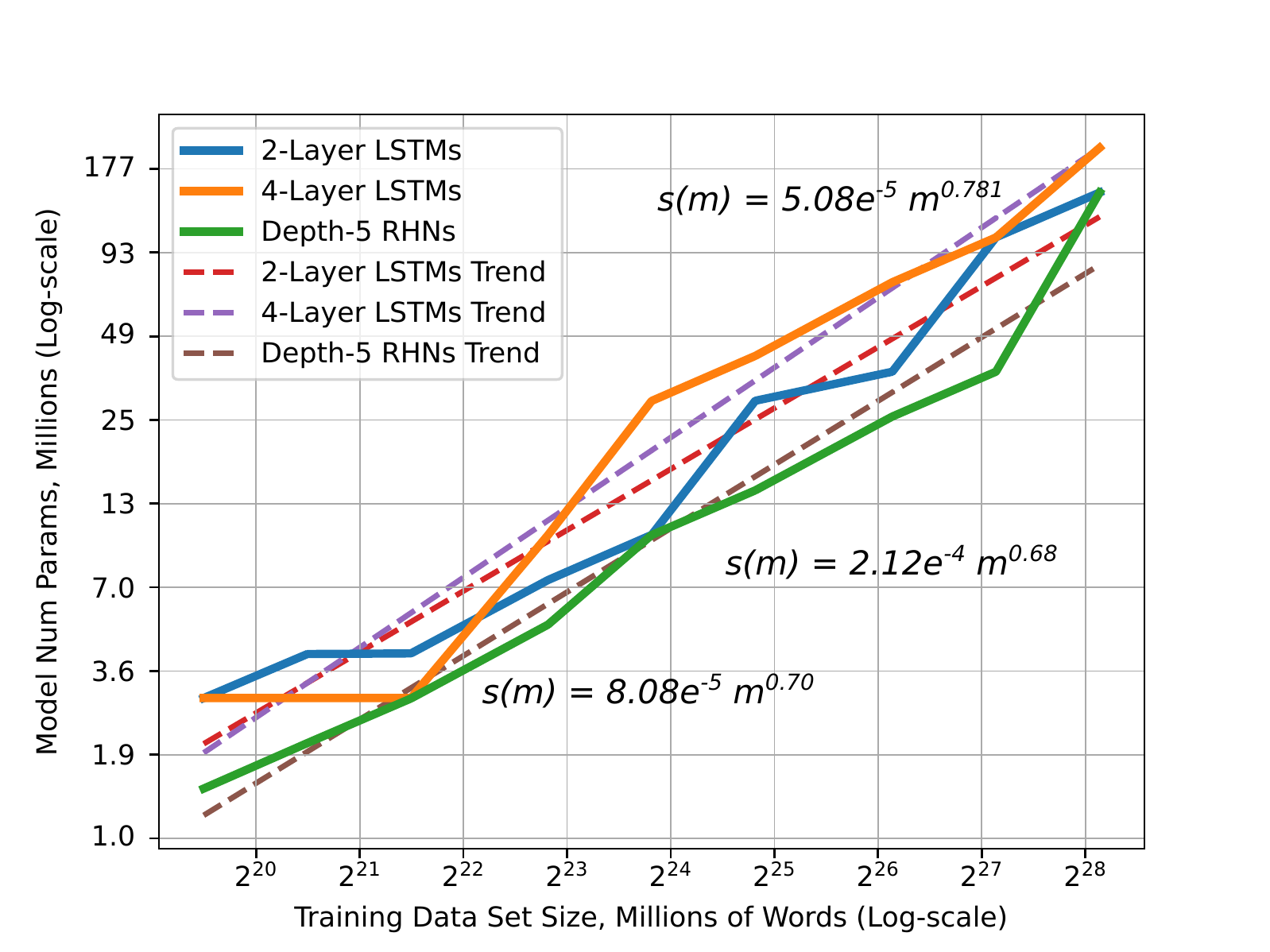}
    \label{fig:word_model_size}
  \end{subfigure}
  \caption{Learning curve and model size results and trends for word language models.}
  \label{fig:word_results}
\end{figure}

Figure~\ref{fig:word_results} shows the learning curve and model size results for LSTM and RHN word language models. First, the loss scaling relationships are smooth power-law functions of the data set size with almost exactly the same exponents: $\beta_g = -0.0656 \pm 1\%$. Again, larger models have more difficulty optimizing to fit the larger training sets. For word LMs, we invest more in batch size and learning rate tuning to find the best model on these larger training shards. The tuned models settle at or just above the power-law trend, suggesting that further hyperparameter search is likely to yield a model on the trend.

Strikingly, although these model architectures differ appreciably, they all show the same learning curve profile characterized by the power-law exponent. Increasing the LSTMs depth from 2 to 4 layers decreases the networks' accuracy by about $1.5\%$, but both model architectures see the same relative loss improvement as we increase training set size. RHNs have significantly different recurrence structure than LSTMs, but show nearly identical learning curves.

Model size results show that best-fit models grow sublinearly in the training shard size. Specifically, the best-fit 2-layer LSTM and depth-5 RHNs model sizes grow roughly with $\beta_p = 0.69 \pm 5\%$. The 4-layer LSTMs show slightly worse scaling with $\beta_p = 0.78$, suggesting they make less effective use of extra parameters on larger data sets. Despite the model size scaling differences, for a given model architecture, we can accurately predict the model size that will best fit increasingly larger data sets.

\subsubsection{Character Language Models}

To test character-level language modeling, we train RHNs of depth 10, which we found to achieve SOTA accuracy on the Billion Word data set. We scale the number of layer weights to modulate the model size and find the best fit model for each training shard size. We use SGD, optimizing for per-predicted-character cross-entropy loss, which we report on the validation set. We also compare SGD against the Adam optimizer to test their effects. The input and output vocabulary includes all alphanumeric characters and common symbols for total size 98. We train the models on shards of 0.01\% up to 4\% of the Billion Word data set.

\begin{figure}
  \centering
  \begin{subfigure}[b]{0.5\textwidth}
    \includegraphics[width=\textwidth]{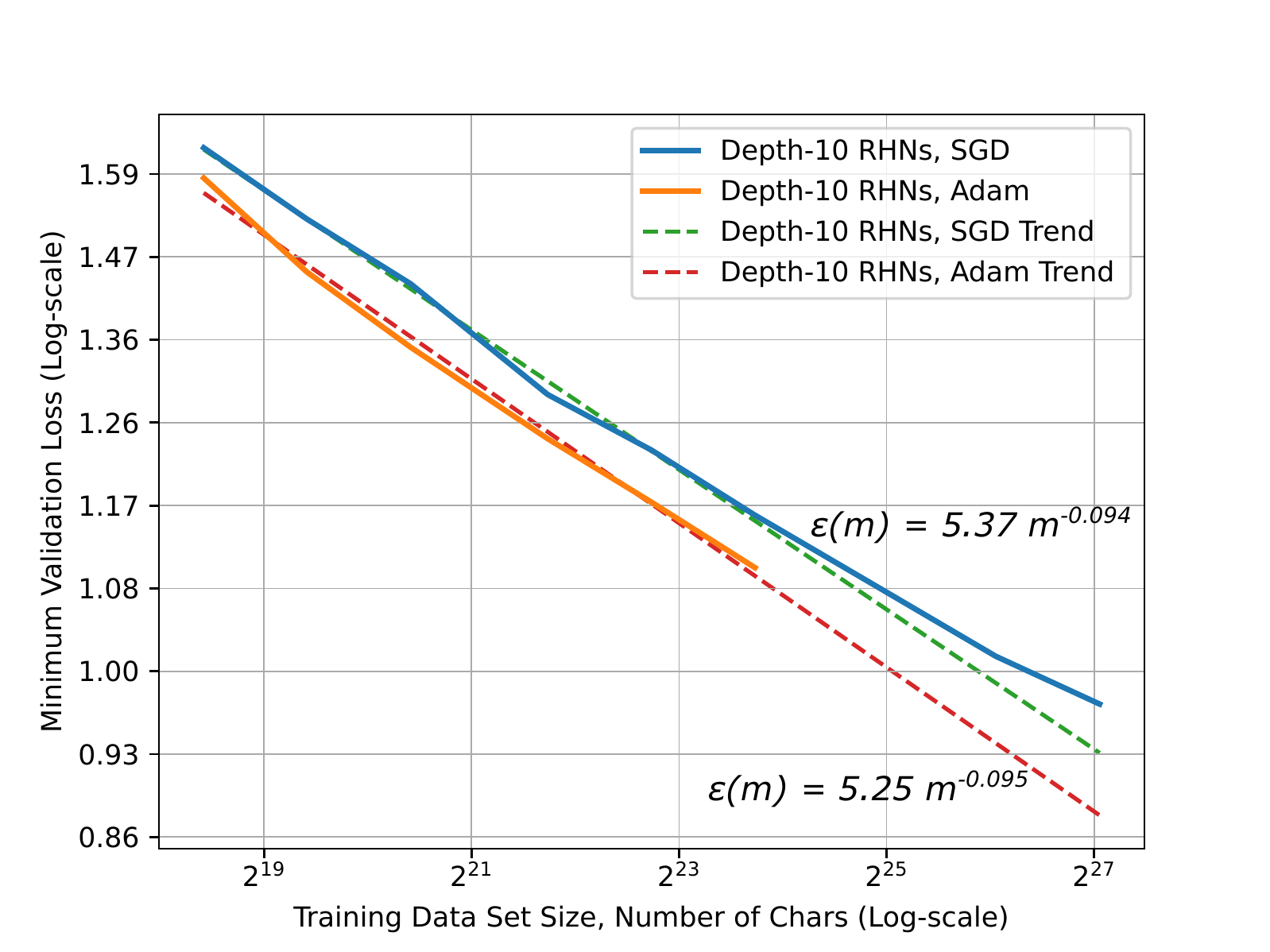}
    \label{fig:char_data_gen}
  \end{subfigure}
  \hspace{-10pt}
  \begin{subfigure}[b]{0.5\textwidth}
    \includegraphics[width=\textwidth]{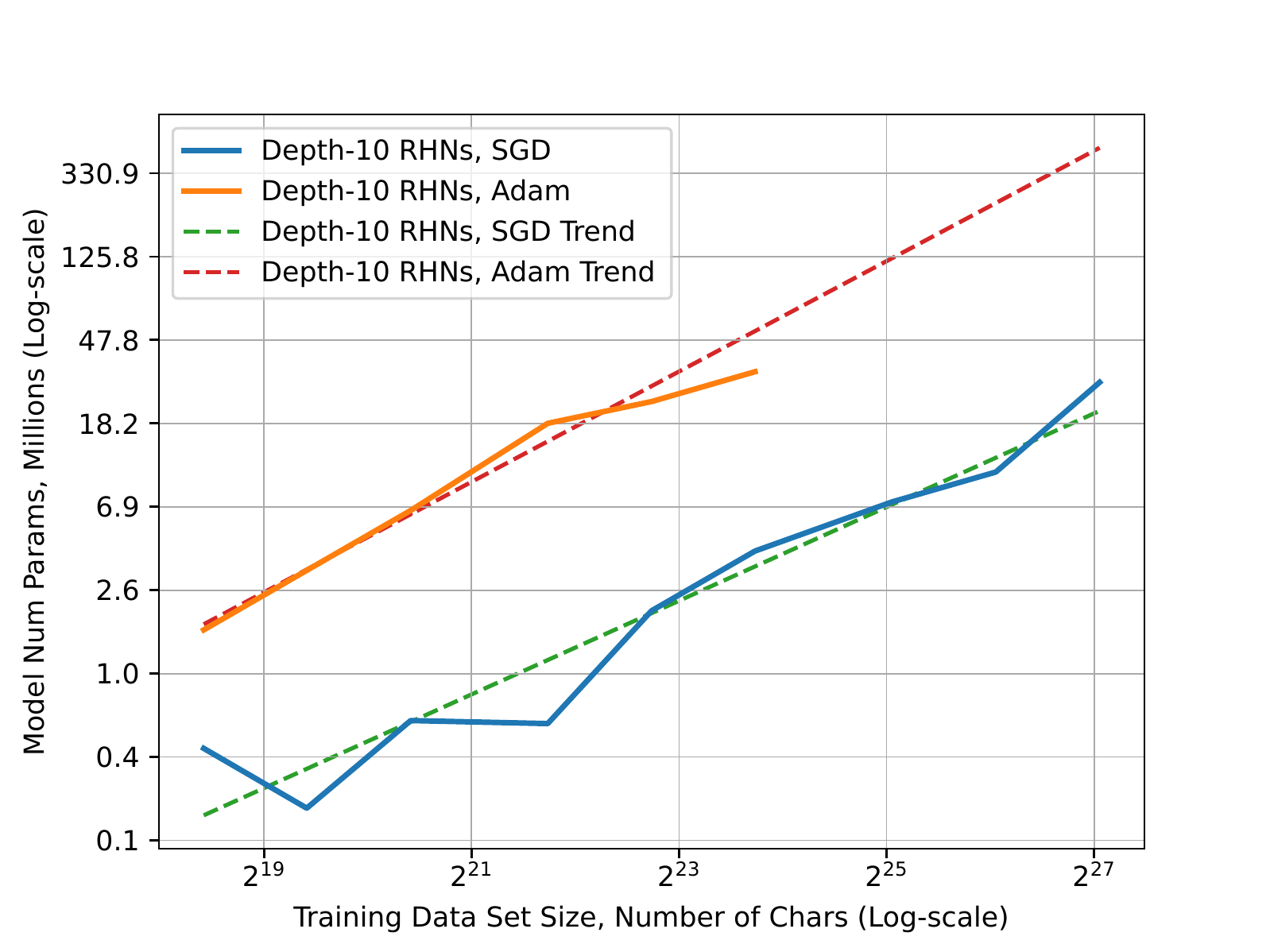}
    \label{fig:char_model_size}
  \end{subfigure}
  \caption{Learning curve and model size results and trends for character language models.}
  \label{fig:char_results}
\end{figure}

Results for character LMs appear substantially similar to word LMs. Figure \ref{fig:char_results} plots the generalization and model size scaling results for character LMs. As with word LMs, generalization improves on a power-law as training data size increases, though the exponent is $\beta_g = -0.0936$ for the SGD optimizer and $\beta_g = -0.0954$ for the Adam optimizer. These power-law exponents are very similar despite the significant optimizer differences---Adam appears to just shift the learning curve down by $\sim 5\%$ relative.

Like word LMs, character LMs also learn significantly more slowly than predicted by theoretical results. Though word and character LMs have some major differences, their learning curve exponent differences indicate that character LMs are able to learn relationships between characters with successively fewer samples than word LMs are able to learn relationships between words.

Character LMs also show sublinear model size growth as data set size increases. Specifically, $\beta_p = 0.78$ for SGD optimized models and $\beta_p = 0.92$ for Adam optimized. Character LMs with the SGD optimizer see similar improvements from increased model size as word LMs, while the Adam optimized models see poorer scaling and require significantly more parameters ($\sim 8$--$11\times$). Still, their learning and model size curves appear predictable.

%
\subsection{Image Classification}

As a comparison to our machine translation and language modeling results---where inputs and outputs are low-dimensional time-series data---we next test image classification, a machine learning domain that aims to identify objects in high-dimensional image data. Image classification is used in applications such as object recognition, image captioning, and tagging video content. Image classification also shows power-law learning curves and model size scaling relationships. We also show that accuracy plateaus near random guessing on very small training sets.

We test ResNets (\cite{he:resnets:cvpr:2016}), which were recently the SOTA architectures for ImageNet classification (\cite{russakovsky:imagenet:arxiv:2015}). ResNets are deep networks built from blocks containing convolutions, nonlinearities, and pooling layers. They have residual connections from the inputs to outputs of most blocks that permit the network to bypass layers. We train and validate ResNets on various shard sizes of ImageNet, ranging from 1 image per class (0.08\% of images) up to 800 images per class (62\%). ImageNet has 1,000 different object classes as outputs.

We start with 5 known variants of ResNets with depths 18, 34, 50, 101, and 152 layers. We first scale the model sizes by changing the number of layers ranging from 10 to 200. To provide even finer-grained model size control, we also change the number of convolution filters using a scaling factor. We scale filter counts proportionally across all convolution blocks with scaling factors 0.0625 to 1.5. We test models with parameter counts ranging from 89K to 121M. We use a Nesterov Momentum optimizer targeting classification cross-entropy loss. We remove weight regularization.

Figure \ref{fig:image_results} shows that various loss calculations follow the power-law learning curves. We report average validation cross-entropy, top-1, and top-5 classification errors. For small training sets---less than roughly $25$ images per class---these error metrics are roughly equal to the model random guessing (i.e., greater than $-log(1/1,000) \approx 6.9$ for cross-entropy, and near $1-(1/1,000) = 99.9\%$ classification error for top-1 and top-5). Models are unable to extract enough information from these small training sets to make many accurate classifications on the validation set. We later describe this as the "small data region".

As long as the training set is large enough, we observe that generalization improves on a power-law, but the power-law exponent is different for each of the reported metrics. The top-1 classification error exponent is $\beta_g = -0.309$. On the other hand, the exponent for top-5 classification error is $\beta_g = -0.488$. Since top-5 classification is a superset of top-1 classification, the top-5 error should improve at least as quickly as top-1, but in fact, the top-5 error improves significantly more quickly as training data size increases. The validation cross-entropy exponent is $\beta_g = -0.35$, but the metric has different range than classification error, so we cannot directly compare their exponents.

Finally, Figure~\ref{fig:image_results} also shows that model size growth is again predictable. The best-fit ResNet models grow following a sublinear curve with exponent $\beta_p = 0.573$. This exponent indicates that they grow more slowly than models in other domains we have tested. However, even on the smallest data sets, ResNets require fairly large models to fit data well, at least 3.4M parameters.

\begin{figure}
  \centering
  \begin{subfigure}[b]{0.5\textwidth}
    \includegraphics[width=\textwidth]{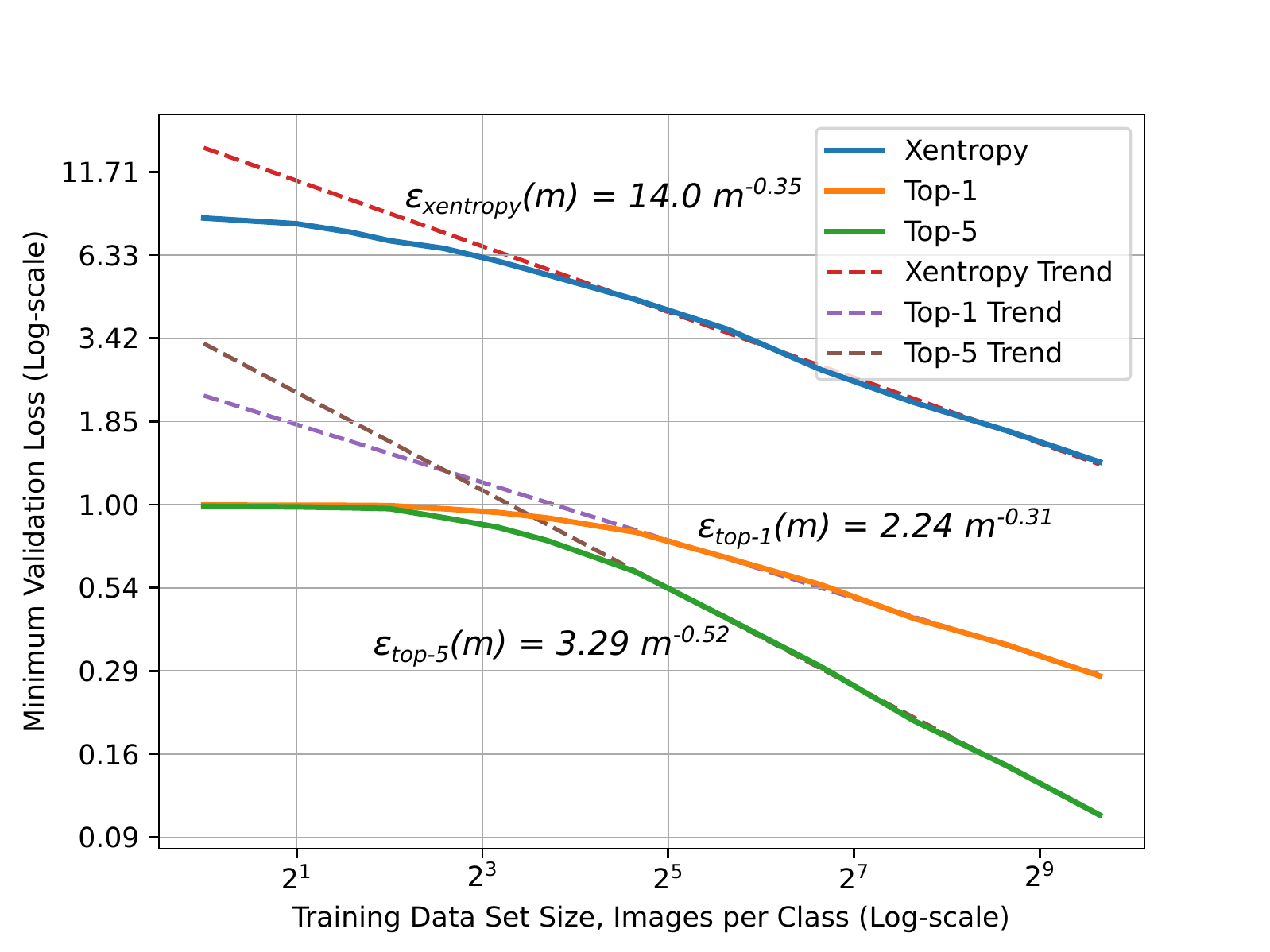}
    \label{fig:image_data_gen}
  \end{subfigure}
  \hspace{-10pt}
  \begin{subfigure}[b]{0.5\textwidth}
    \includegraphics[width=\textwidth]{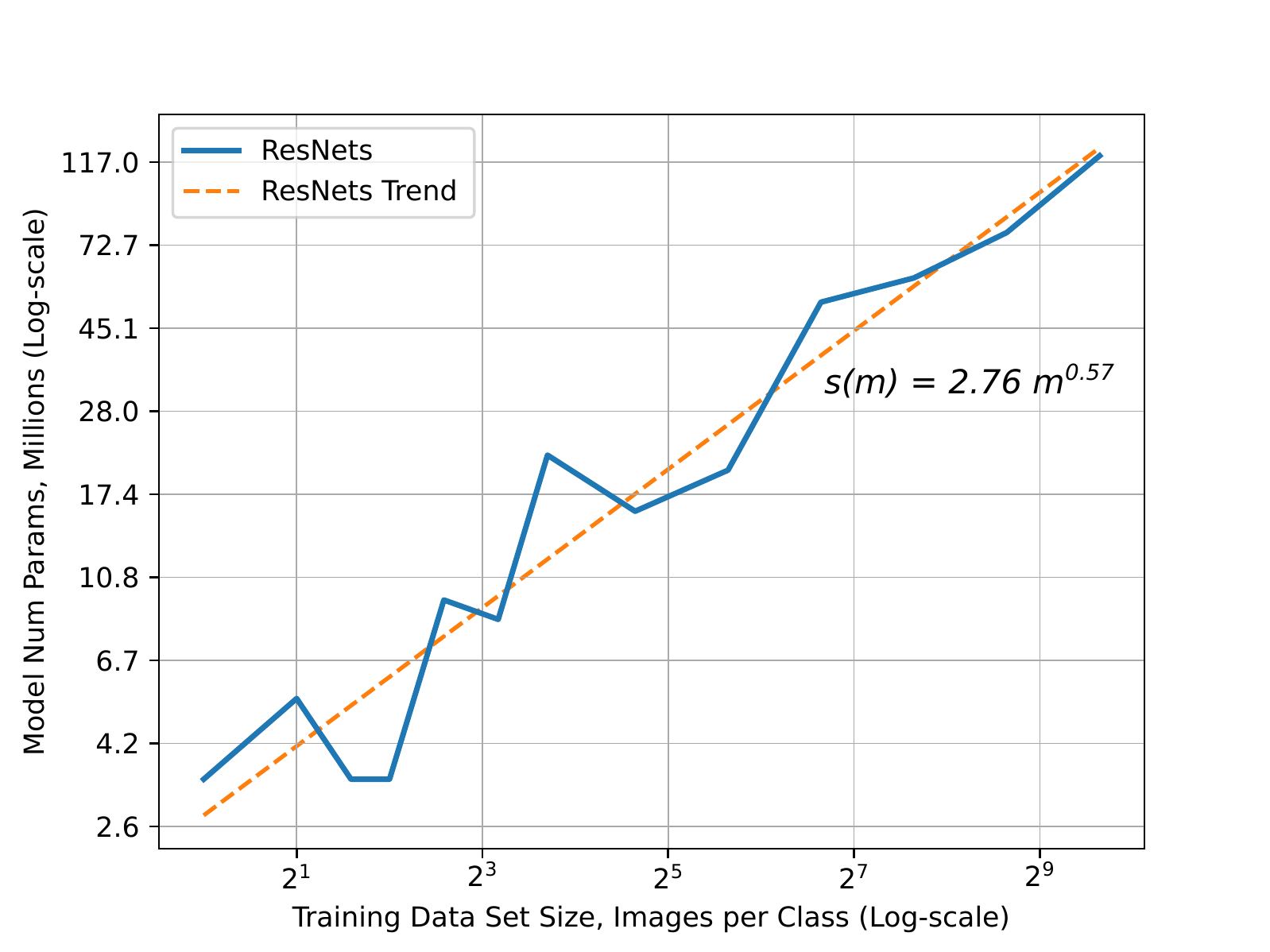}
    \label{fig:image_model_size}
  \end{subfigure}
  \caption{Learning curve and model size results and trends for ResNet image classification.}
  \label{fig:image_results}
\end{figure}

%
\subsection{Speech Recognition}

Speech recognition techniques convert acoustic speech signals into text or commands. Speech recognition is used in diverse applications such as voice-powered machine controls and conversational user interfaces. Recent research has shifted from hand-engineered speech recognition pipelines over to end-to-end deep learning based methods that show promising results (\cite{hannun:deepspeech:arxiv:2014,chorowski:speechattention:nips:2015,amodei:ds2:icml:2016}). Speech recognition provides an interesting contrast to prior domains; speech input data is medium-dimensionality time-series data.

To test trends in speech recognition, we train two recent SOTA models: Deep Speech 2 (DS2) and an attention-based model. The DS2 model (\cite{amodei:ds2:icml:2016}) consists of two 2D convolution layers followed by four bidirectional LSTM recurrent layers. We use Adam to optimize connectionist temporal classification loss (CTC, \cite{graves:ctc:icml:2006}). We compare DS2 against a hybrid attention model similar to those described by \cite{battenberg:speechtxducers:arxiv:2017}. The model has an encoder comprised of three bidirectional LSTM layers with two intermediate max-pooling layers, and a hybrid attention decoder. We use Adam to optimize output sequence average cross-entropy loss. For both models, we remove regularization (weight decay and noise) to observe underfitting or overfitting models.

The inputs to these models are a sequence of log-spectrograms of power normalized audio clips, calculated on 20 ms windows. Outputs are the English alphabet along with the blank symbol. We do \textit{not} include language models for output sequence beam search, and we report per-predicted-output character error rate on the validation set. We train on shards of labeled data set comprising 11,940 hours of speech containing 8 million utterances \cite{amodei:ds2:icml:2016}.

To vary the number of parameters in both the DS2 and attention models, we vary the number of weights in all LSTM layers, so that separate layers have the same number of weights. In the attention model, we also proportionally scale number of weights in the attention LSTM and decoder cells. For the DS2 model, model sizes range between 300K to 193M parameters, and for the attention based models, sizes range from 95K to 156M parameters.

\begin{figure}
  \centering
  \begin{subfigure}[b]{0.5\textwidth}
    \includegraphics[width=\textwidth]{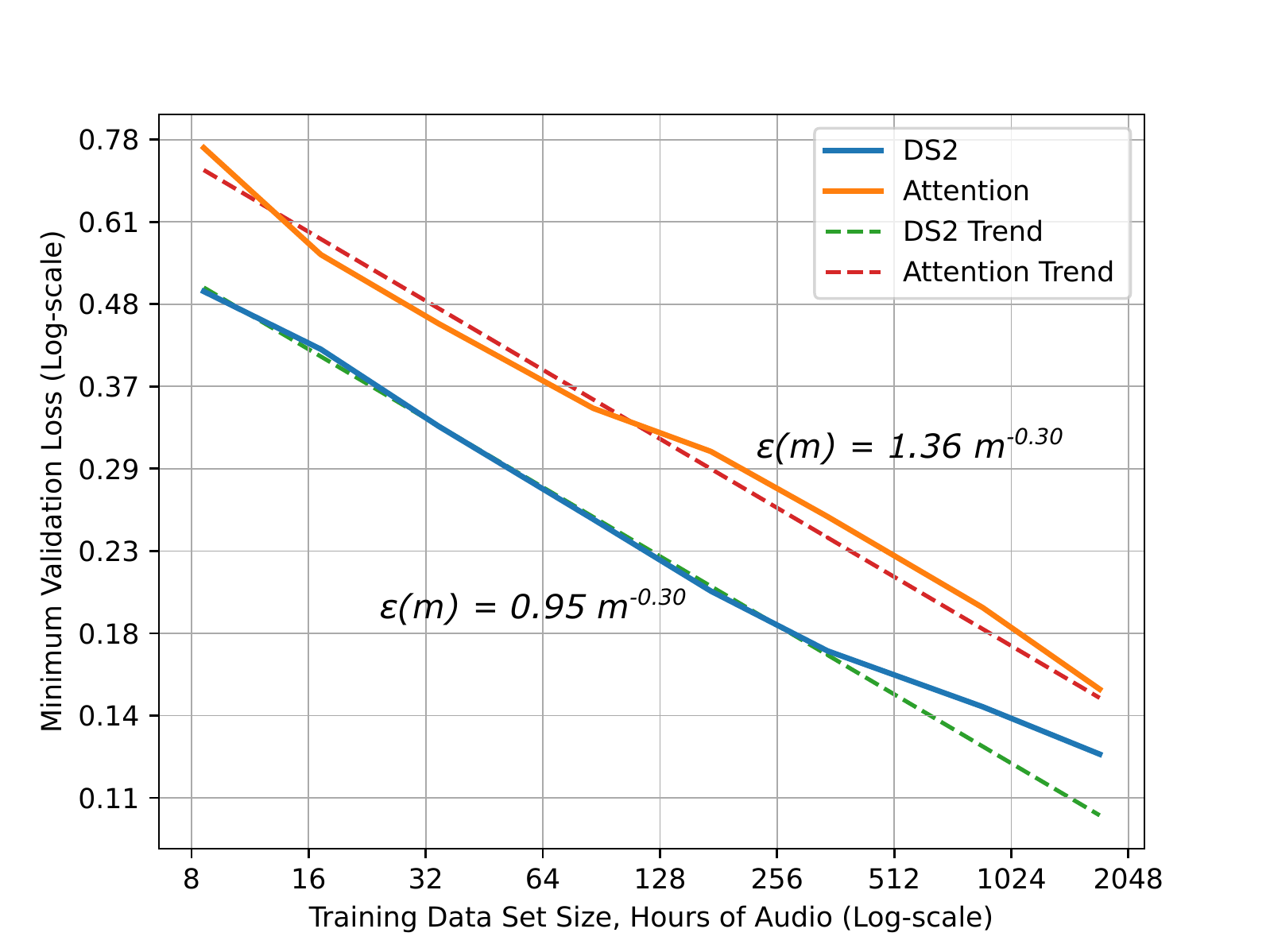}
    \label{fig:speech_data_gen}
  \end{subfigure}
  \hspace{-10pt}
  \begin{subfigure}[b]{0.5\textwidth}
    \includegraphics[width=\textwidth]{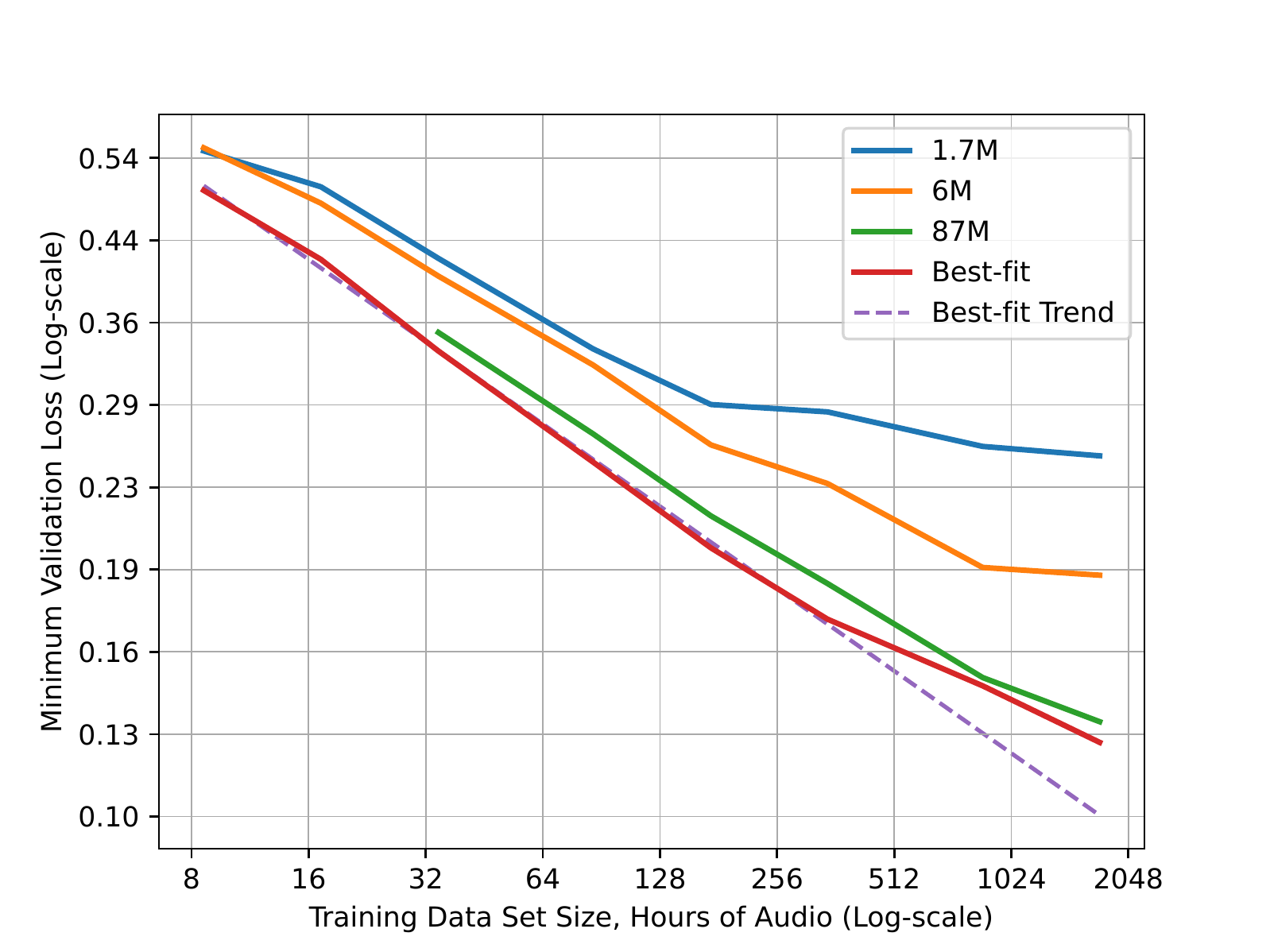}
    \label{fig:speech_model_size}
  \end{subfigure}
  \caption{Learning curves for DS2 and attention speech models (left), and learning curves for various DS2 model sizes, 1.7M to 87M parameters (right).}
  \label{fig:speech_results}
\end{figure}

Figure~\ref{fig:speech_results} (left) shows that both DS2 and attention based speech models experience the same power-law learning curve improvements. Although these models have significantly different encoders and decoders, they see the same relative improvements in character error rate as training set size increases with $\beta_g = -0.299 \pm 0.7\%$. Consistent with prior work (\cite{bahdanau:e2espeechattention:arxiv:2016,battenberg:speechtxducers:arxiv:2017}), larger attention models trained on larger data sets tend to be easier to optimize than DS2 models, whose generalization error tends away from the power-law trend on larger data sets.

For speech recognition, we trained a coarser spectra of model sizes, so model size scaling results for each training data size are not as meaningful as with LMs or image classification. Instead, we break down learning curves a bit by showing the curves for three different DS2 model sizes, 1.7M to 87M parameters (right side of Figure~\ref{fig:speech_results}). These curves show similar trends to those in other domains: As data size increases, most models experience power-law generalization improvements until the data size approaches their effective capacity. In this case, the 1.7M parameter model's accuracy plateaus starting at around 170 hours of audio, and the 6M parameter model plateaus around 860 hours of audio (i.e., roughly 5$\times$ more, which is similar to the difference in model size). Larger models (e.g., 87M parameters) show generalization error close to the best-fit trend up to larger data set sizes.

\section{Implications of Generalization Error and Model Size Scaling}
\label{sec:discussion}

Predictable learning curves and model size scaling indicate some significant implications on how DL could proceed. For machine learning practitioners and researchers, predictable scaling can aid model and optimization debugging and iteration time, and offer a way to estimate the most impactful next steps to improve model accuracy. Operationally, predictable curves can guide decision-making about whether or how to grow data sets or computation. Finally, these curves can be used to estimate compute requirements and guide system design and expansion. They underscore the importance of continued computational scaling.

%
\subsection{The Learning Curves of Real Applications}

\begin{figure}
  \centering
  \begin{subfigure}[b]{0.6\textwidth}
    \includegraphics[width=\textwidth]{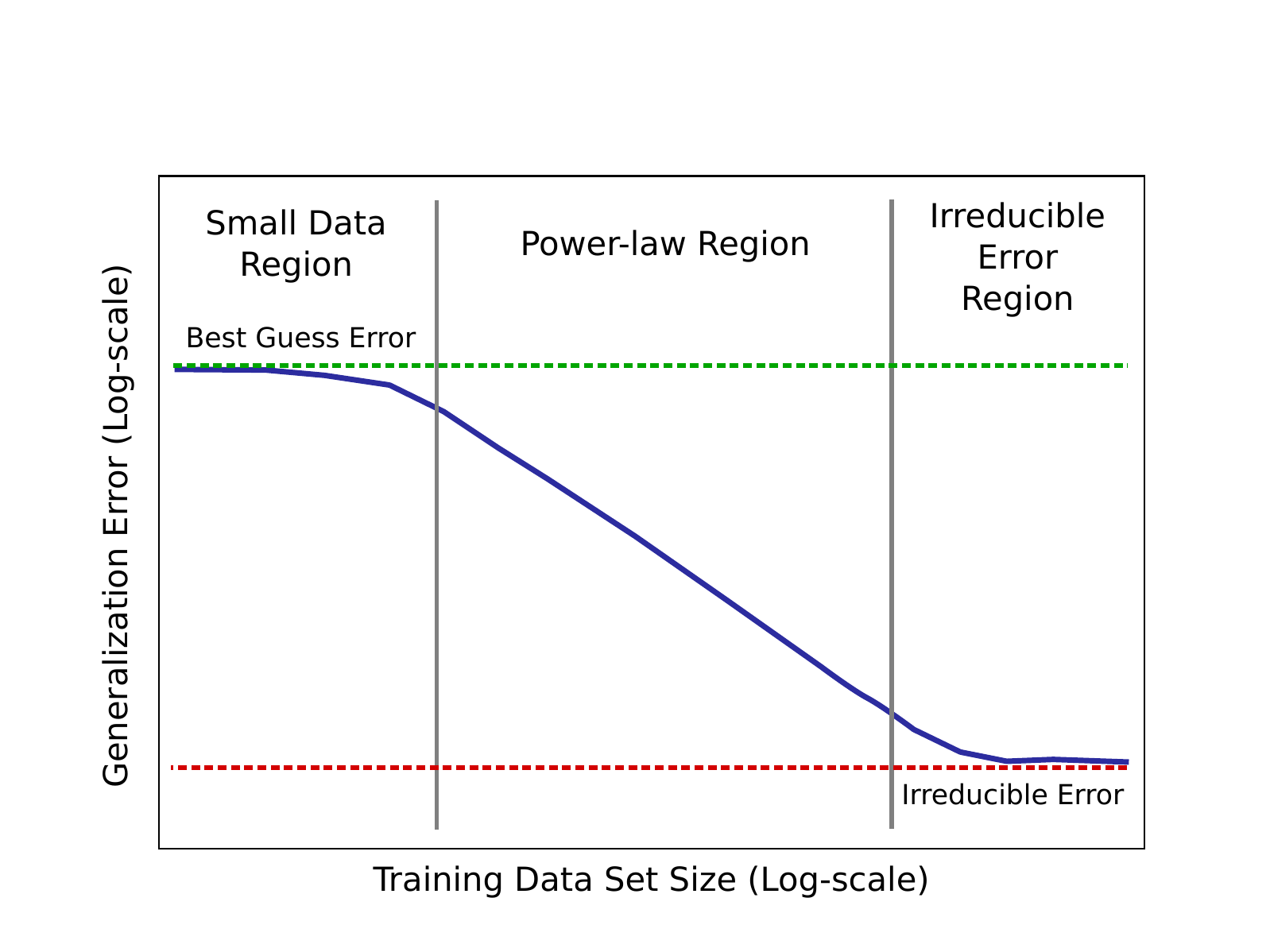}
    \label{fig:cartoon_power_law_regions}
  \end{subfigure}
  \vspace{-12pt}
  \caption{Sketch of power-law learning curves}
  \label{fig:cartoon_power_law}
  \vspace{-12pt}
\end{figure}

We start with a summary of the character of real application learning curves. Figure~\ref{fig:cartoon_power_law} shows a cartoon sketch power-law plot that breaks down learning curve phases for real applications. The curve begins in the \textbf{small data region}, where models will struggle to learn from a small number of training samples. Here, models can only perform as well as "best" or "random" guessing.

The middle portion of learning curves is the \textbf{power-law region}, where each new training sample provides information that helps models improve predictions on previously unseen samples. The power-law exponent defines the steepness of this curve, or the slope when viewed on a log-log scale. It is an indicator of the difficulty for models to represent the data generating function. Results in this paper indicate that the power-law exponent is unlikely to be easily predicted with prior theory and probably dependent on aspects of the problem domain or data distribution.

Finally, for most real world applications, there is likely to be a non-zero lower-bound error past which models will be unable to improve. This lower bound includes Bayes error---the information theoretic lower bound based on the data generating function---and a combination of other factors that cause imperfect generalization. For instance, mislabeled samples in the training or validation data sets are likely to cause irreducible error. We call this the \textbf{irreducible error region}. Although we have yet to reach the irreducible error region for real applications in this study, we have tested that this lower bound exists for toy problems.

%
\subsection{Implications for DL Practitioners and Researchers}

Our results indicate that in many real world contexts, simply scaling your training data set and models is likely to predictably improve the model's accuracy. This predictable behavior may help practitioners and researchers approach debugging and target better accuracy scaling.

\textbf{Debugging DL Training:}
The empirical learning curves that we collect show robust power-law regions. Surprisingly, we see a power-law region across all of our tests, which cover different problem domains, model architecture features, optimizers, and optimization functions. Table~\ref{table:breadth_of_factors} in Appendix~\ref{appendix:domaindetail} shows the breadth of architectural and optimization features in our tests.

Given the robustness of the power-law learning curve character, we suggest that DL practitioners and researchers consider this methodology for debugging data, model architecture, or optimization issues. Divergence from power-law-like improvements is likely to indicate deeper challenges with improving accuracy. For instance, when word and character language models began to diverge from power-law scaling for the 10\% and 2\% of the Billion Word benchmark, respectively, we saw this divergence as a cue to more exhaustively test hyperparameters. We found that larger training sets and larger models become harder to optimize. For large models with fixed hyperparameters, increasing the batch sizes and learning rates usually closed a significant portion of the gap to the power-law trend. Analogously, smaller training sets often require smaller batch sizes to ensure models behave well while fitting. We expect that other model debugging, such as finding good model priors or initialization, can also benefit from this methodology.


\textbf{Beating the Power-law:}
Machine learning researchers often try to improve model accuracy by changing model architectures trained on a given data set. Their efforts can involve complex trial-and-error and rely on creativity or epiphany to improve results. Our tests suggest that model architecture improvements such as model depth only shift learning curves down, but might not improve the power-law exponent.

A broader question is whether machine learning techniques could improve the power-law learning curve exponent, or in other words, to improve generalization more quickly as training data grows. Theory suggests that best case accuracy scaling is with $\beta_p = -0.5$ or $-1$. Thus, for some problem domains---especially language modeling---the potential accuracy improvements are immense if we knew ways to improve the power-law exponent.

We have yet to find factors that affect the power-law exponent. To beat the power-law as we increase data set size, models would need to learn more concepts with successively less data. In other words, models must successively extract more marginal information from each additional training sample. This might be difficult without adjustments to the data set. We suggest that future work more deeply analyze learning curves when using data handling techniques, such as data filtering/augmentation, few-shot learning, experience replay, and generative adversarial networks.

%
\subsection{Operational Implications}

Learning and model size curves can also guide decisions about data collection and scaling computation. As we project forward on learning curves, we can run into three types of scaling limits: training data is too small, computation is too slow, or irreducible error. 

\textbf{Model Exploration using Small Data:}
It may seem counterintuitive, but an implication of predictable scaling is that model architecture exploration should be feasible with small training data sets. Consider starting with a training set that is known to be large enough that current models show accuracy in the power-law region of the learning curve. Since we expect model accuracy to improve proportionally for different models, growing the training set and models is likely to result in the same relative gains across the models.

The possibility of doing small data testing has significant implications on manual and automatic architecture search. Researchers or DL systems may be able to iterate on small data sets to find models that can accurately model the structure of the data distribution. Then, these models can be scaled to larger data sets to ensure proportional accuracy gains.

Although small data set testing may be possible, it can be difficult to ensure that training data is large enough to see the power-law learning curve region. We have found that models with poor optimizer parameterization or model priors/initialization show accuracy cliffs, where accuracy is only as good as best guessing, but the model trains on enough data to be in the power-law region. Researchers must take great care when defining a "large enough" training set for small data testing. We leave the methodology for defining such a training set to future work.

\textbf{Computational Limits:}
If we have identified a desirable model to scale to larger training sets, the next potential limitation is the speed of computation. In some cases, training large models on very large data sets would take months or years of critical path compute time, making these training runs impractical for any real world problem on existing systems. However, predictable learning and model size curves may offer a way to project the compute requirements to reach a particular accuracy level. The compute requirements could inform decisions about how to scale computational capacity to unlock these compute-limited applications.

After reviewing the tests performed for this work, we find that we have run into compute limitations for the largest data sets of each application domain. Most frequently, we run out of GPU memory when trying to train the largest models on the largest data sets. In many cases, we could alleviate these issues with techniques like data or model parallelism, though they would require significant software changes to reduce per-compute-unit memory requirements. Alternatively, we could migrate training to systems with more memory. Further, our longest running training sessions have taken as long as 6 weeks to converge. Parallelism and hardware improvements to reduce this time are highly desirable.

\textbf{Running into Irreducible Error:}
If we approach the irreducible error region in real applications, improving accuracy will require techniques outside the straightforward recipe. As an example, reaching Bayes error for a problem would be an indicator that no further information can be extracted from the existing data set---the application might be considered "solved". If further model architecture search, training set growth, or computational scale cannot improve accuracy, it is likely that models are achieving the irreducible error. To improve error beyond this irreducible level may require techniques that could increase the information content of the data to distinguish between the samples that contribute to the Bayes error.

It may be difficult to assess whether we have reached irreducible error or if models just have inherent bias that makes them unable to resolve more information from the data. One approach might be to estimate the human error rate for the task. As long as humans are constrained to the same data for the problem, their best-case accuracy may be a reasonable upper bound on the irreducible error. If humans can perform better than current models, it is likely that models could be improved.

%
\subsection{Hardware Design Implications}

Since predictable learning and model size curves can offer a way to project the compute requirements required to reach a particular accuracy level, they can also help hardware developers predict the needs of DL hardware users.

\textbf{Deep Learning Hardware Design:}
First, there is a close tie from compute operation rate (e.g., floating point operations, or "FLOPs") to model accuracy improvements. Power-law learning curves and model size growth indicate that each new hardware generation with improved FLOP rate can provide a predictable step function improvement in relative DL model accuracy. Further, the different learning curve and model size growth exponents can act as an indicator of the computational scalability of different application domains. Different application domains will see varying benefits from improved FLOP rates, which can help prioritize the domains that should be targets for improved compute throughput.

Second, as new model architecture features emerge for DL applications, hardware designers can estimate the importance of accelerating these new model features. Suppose the new model feature runs very slowly on current hardware, and as a result, throughput is not sufficient for the new model architecture to improve SOTA (e.g., a new non-linearity not supported by current floating point function units). Implementing the new feature in hardware might be costly, and the resulting performance improvements might not provide the required throughput to achieve necessary model accuracy to improve the state-of-the-art. Hardware designers could estimate the throughput of a hardware implementation and the resulting model accuracy gains to weigh them against the benefits of other hardware components.

\textbf{The Performance-Accuracy Trade-off:}
Many DL software and hardware techniques impose a trade-off between model accuracy and the speed of computation. Learning curves and model size growth can indicate whether these techniques could regain lost accuracy by improving the speed of computation. For example, low-precision computation/quantization and sparse models give up some model accuracy (e.g., up to $20\%$) in order to improve compute throughput. If the compute throughput improvements allow DL developers to train larger models on larger data sets, these accuracy losses might be easily recoverable.

\section{Conclusion}

The deep learning (DL) community has created impactful advances across diverse application domains by following a straightforward recipe: search for improved model architectures, create large training data sets, and scale computation. While model architecture search can be unpredictable, the model accuracy improvements from growing data set size and scaling computation are empirically predictable. We empirically validate that DL model accuracy improves as a power-law as we grow training sets for state-of-the-art (SOTA) model architectures in four machine learning domains: machine translation, language modeling, image processing, and speech recognition. These power-law learning curves exists across all tested domains, model architectures, optimizers, and loss functions. Further, within each domain, model architecture and optimizer changes only shift the learning curves but do \textit{not} affect the power-law exponent---the "steepness" of the learning curve. We also show that model size scales sublinearly with data size. These scaling relationships have significant research, practice, and systems implications on deep learning progress.

\newpage
\renewcommand{\bibsection}{\subsubsection*{References}} \small
\bibliographystyle{abbrvnat}
\bibliography{bibliography}

\newpage
\appendix \normalsize 
\newtheorem{theorem}{Theorem}

%
\section{Detail on Tested Machine Learning Domains\label{appendix:domaindetail}}

Based on the results presented in this paper, we suspect the power-law data-generalization behaviors of each machine learning domain are due to the structure of the problem domain. To encourage further theoretical investigation, this section reports precise definitions of input and output spaces, optimized and reported loss functions for each machine learning domain, and other information that may be relevant to predicting the data-generalization and model size scaling. Additionally, to show the breadth of our testing, Table~\ref{table:breadth_of_factors} summarizes the different domains, model architecture features, optimization and loss functions we tested.

\begin{table}[h]
    \centering
    \small
    \caption{Breadth of domains, model features, optimizers, loss functions tested}
    \begin{tabular}{|l|l|l|l|l|l|}
        \hline
         & & & & Loss & \\
        Domain & Model & Model Features & Optimizer & Function & Exponent \\
        \hline
        Machine & LSTM & Encoder-decoder with attention, & Adam & Token & $-0.128$ \\
        Translation & & with and without dropout & & Error & \\
        \hline
        Word LMs & LSTM & GEMMs, $\sigma$+$tanh$ non-linearities & SGD & Xentropy & $-0.066$ \\
                 \cline{2-6}
                 & RHN  & GEMMs, $\sigma$+$tanh$ non-linearities & SGD & Xentropy & $-0.070$ \\
        \hline
        Char LMs & RHN & GEMMs, $\sigma$+$tanh$ non-linearities & SGD, Adam & Xentropy & $-0.094$ \\
        \hline
        Image & ResNet & Feed-forward, CONV blocks, & Nesterov & Classify & $-0.309$ \\
        Classification & & pooling and skip connections & Momentum & Error & \\
        \cline{5-6}
        & & & & X-entropy & $-0.350$ \\
        \hline
        Speech & DS2 & Bi-LSTM, CTC loss & Adam & CER & $-0.299$ \\
        \cline{2-6}
        Recognition & Attention & Bi-LSTM, CONVs, attention layer & Adam & CER & $-0.296$ \\
        \hline
    \end{tabular}
    \label{table:breadth_of_factors}
\end{table}

\subsection{Neural Machine Translation}

Given input and output vocabularies, $V_S$ and $V_T$, NMT models learn a mapping $D_S \rightarrow D_T$ where $D_\cdot = V_\cdot^*$ (Kleene star). In this work, we use a word-piece vocabulary shared between the source and target languages. After applying pre-processing methods\footnote{clean-up and byte pair encoding uses \href{https://github.com/tensorflow/nmt/blob/master/nmt/scripts/wmt16_en_de.sh}{Tensorflow NMT WMT scripts}} adopted in many SOTA models, there are 36545 sub-word tokens. We include UNK and PAD tokens for unknown words and minibatch padding for the source domain (German, $|V_S|=36547$); for the target domain (English), UNK, PAD, SOS (start-of-sequence), and EOS (end-of-sequence) are included ($|V_T|=36549$). The German and English sentences in newstest2016 were on average 27 and 25 tokens long with the longest sequences having 101 and 94 tokens respectively.

During training, we minimize cross entropy loss (i.e. the conditional probability of the target sentence given the source sentence). We report per-token error rate and bits-per-token. Because our reported metrics are per-token measure of the target language, the dataset size is given by the number of English tokens in the training set.

\subsection{Language Modeling}

\subsubsection{Word Language Models}
During training for world language models, we unroll sequences out to length 80 for backpropagation. We also use continuous minibatching: At end of one sentence in the data set, we concatenate an end-of-sentence designator, followed by the next sentence from the data set.

Let $C$ be the language's vocabulary. Then, $|C| = 10,004$ after we include special symbols like the unknown token. The input space is $I = \bigcup C^i$ where $i$ is the number of words previously seen in a sequence. We use continuous minibatching, so the effective history length, $i$, can be very long. The output space is $O = C$.

Rather than perplexity, we use normalized cross-entropy loss: $-\frac{1}{N}\sum_{i} ln~p_{w_i}$, where $p_{w_i}$ is the model's predicted probability of seeing the $i$th token. $N$ is either the number of sequences in a batch for training optimization or $N$ is the number of predicted words in the validation set.

\subsubsection{Character Language Models}
For character language models, we unroll sequences out to length 150 characters. Unlike word language models, we use non-continuous minibatching, so some sequences end at an end-of-sentence token. Sequences longer than 150 characters are truncated.

Let $C$ be the language's vocabulary of alphanumeric characters and symbols. Then, $|C| = 98$ after we include special symbols like the end-of-sentence token. Similar to word language models, the input space is $I = \bigcup C^i$ where $i$ is the number of characters previously seen in a sequence. Since we use non-continuous minibatching, so the effective history length, $i$, is at most 150. The output space is $O = C$.

Similar to word language models, we use normalized cross-entropy loss: $-\frac{1}{N}\sum_{i} ln~p_{w_i}$, where $p_{w_i}$ is the model's predicted probability of seeing the $i$th token. $N$ is either the number of sequences in a batch for training optimization or $N$ is the number of predicted characters in the validation set.

\subsection{Image Classification}

ImageNet images were initially scaled proportionally so that the shortest dimension of the image is 256 pixels. During training, these images are cropped to 224x224 as input to the CNN. Input images are 224x224 pixels by 3 color channels of 8 bits each. Thus, the total input space size is $|I| = 224 * 224 * 3 * 256 \approx 38.5M$. The output space is 1,000 different object classes that might be contained in the image. For training, we also augment the dataset by modifying the brightness, contrast, saturation, and lighting. In addition, we also flip the image horizontally. \footnote{Training and data augmentation is performed using ResNet implementation in \href{https://github.com/ppwwyyxx/tensorpack/blob/master/examples/ResNet/imagenet-resnet.py}{TensorPack}}

We optimize for classification cross-entropy loss on each training image, and we report average validation cross-entropy, top-1, and top-5 classification error. Each loss calculation still follows the power-law. However, we note that top-k classification error ($k>1$) is not a distance metric; It uses set containment, which is not symmetric. Alternatively, it is a product of distance metrics, which is not necessarily a distance metric.

\subsection{Speech Recognition}

The audio input to speech recognition models can be represented as the sequence $x = (x_1, .., x_t)$ of length $t$. Each $x_i$ is an audio spectrogram over a small time window. Each predicted output is a character, encoded as a one-hot vector, $y_i$, representing the most probable text token at sequence step $i$. Output sequences are of the form $y = (y_1, ..., y_u)$. Models predict the conditional distribution $p(y|x)$ using an encoder-decoder form. Thus, $p(y|x) = \text{Decode}(\text{Encode}(x), y)$.

\subsubsection{Deep Speech 2}

In DS2 model, the encoder is represented by a stack of recurrent layers with LSTM cells and the decoder is the connectionist temporal classification (CTC)  (\cite{graves:ctc:icml:2006}). The CTC loss function computes the conditional probability by marginalizing all possible alignments and it assumes conditional independence between output predictions at different time steps given aligned inputs. An extra blank label, which can be interpreted as no label, is introduced to map $h$ and $y$ to the same length (i.e., an alignment or path). $a$ is obtained by inserting ($t'$ - $u$) blanks into $y$. A mapping $\mathcal{B}: a \to y$ is defined between $a$ and $y$, which can be done by removing all blanks and repeating letters in $a$.
  \begin{align}
    P_{\text{CTC}}(y|x) & = \sum_{a \in \mathcal{B}^{-1}(y)} P(a|h) \\
    & = \sum_{a \in \mathcal{B}^{-1}(y)}\prod_{t=1}^{t'}{P(a_t|h_t)} \\
    P(a_t | h_t) &= \text{softmax}(a_t, h_t)
  \end{align}

\subsubsection{Attention Model}

Similar to the DS2 model, the attention model uses a stack of recurrent layers with GRU cells as the encoder. The decoder consists of an attention layer followed by a recurrent layer. The attention mechanism aligns the input sequence to the output sequence. The attention mechanism removes the conditional independence assumption in output sequence that the DS2 model makes. More model, attention mechanism, and loss function details can be found in \cite{battenberg:speechtxducers:arxiv:2017}.


%
\section{Power-law learning curve for counting model classifier\label{appendix:powerlawproof}}

First, we show that the expected generalization error for a counting model decreases as a power-law with the size of number of training samples it observes. This proof inspects the asymptotic rate of convergence of the Glivenko-Cantelli theorem limit (\cite{tucker:glivenko-cantelli:mathstats:1959}). Some machinery:

Let $\mathcal{X} = \{0,1\}$ be the input space for a binary coin-flip probability estimator. Let $P_{true}: \mathcal{X} \rightarrow \mathbb{R}$ be the true model probability. To begin, we assume $P_{true}[0] = P_{true}[1] = 0.5$ (i.e., a fair coin flip), but our results easily generalize to unfairly weighted coins.

Let the training sets be $T_i$, such that $T_i$ contains $i$ iid samples from $P_{true}$. Further, let $T_i(x) = \{y \in T_i : y = x\}$ be the subset of samples in $T_i$ equal to $x$.

To start with, we will observe the learning behavior of a counting model, which approximates $P_{true}[x]$ by counting the proportion of training samples in $T_i$ that are equal to $x$. Thus, $P_i[x] = \frac{|T_i(x)|}{i}$.

Also to start with, let the model loss calculation be $l(P_i[x], P_{true}[x]) = |P_i[x] - P_{true}[i]|$ be the L1-norm. This proof sequence can be easily generalized to other loss functions including L2-norm and absolute KL-divergence, and we have empirically validated these norms show the same power-law behavior.

Finally, we define the total loss function as the weighted average loss per output prediction:
\begin{equation}
  L_i := \sum_{x\in\mathcal{X}} l(P_i[x], P_{true}[x]) * P_{true}[x]
\end{equation}

\begin{theorem}
  The expected total loss for a counting model trained on $T_i$ sampled from a true distribution fair coin flip is a power-law with exponent $-0.5$. Specifically,
  \begin{equation}
    \mathbb{E}[L_i] = \Omega\left(\frac{1}{\sqrt{2\pi i}}\right)
  \end{equation}
\end{theorem}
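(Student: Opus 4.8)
First I would collapse the total loss to a single binomial deviation. Because $P_i[0]+P_i[1]=1$ and $P_{true}[0]+P_{true}[1]=1$, we have $P_i[1]-P_{true}[1]=-(P_i[0]-P_{true}[0])$, so $l(P_i[0],P_{true}[0])=l(P_i[1],P_{true}[1])$ and hence $L_i=\sum_{x}l(P_i[x],P_{true}[x])\,P_{true}[x]=|P_i[0]-\tfrac12|$. Writing $S_i:=|T_i(0)|$, which is $\mathrm{Binomial}(i,\tfrac12)$, gives $L_i=\tfrac1i|S_i-\tfrac i2|$, so $\mathbb{E}[L_i]=\tfrac1i\,\mathbb{E}\bigl|S_i-\tfrac i2\bigr|$. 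This reduces the theorem to pinning down the asymptotics of the mean absolute deviation of a symmetric binomial about its mean.

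Next I would evaluate that deviation exactly rather than heuristically. For even $i=2m$, de Moivre's classical identity gives $\mathbb{E}|S_i-m|=m\binom{2m}{m}4^{-m}$; Stirling's approximation in the form $\binom{2m}{m}\sim 4^{m}/\sqrt{\pi m}$ then yields $\mathbb{E}|S_i-\tfrac i2|\sim\sqrt{m/\pi}=\sqrt{i/(2\pi)}$, and dividing by $i$ gives $\mathbb{E}[L_i]\sim 1/\sqrt{2\pi i}$, which is exactly the stated bound (indeed an asymptotic equality, so the $\Omega$ in the statement follows a fortiori). The odd-$i$ case is handled by the corresponding de Moivre formula, or simply by monotone interpolation between neighbouring even indices; the leading term is unchanged. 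As an alternative that avoids the exact identity, one can argue via the central limit theorem: $(S_i-\tfrac i2)/\sqrt{i/4}\Rightarrow Z\sim N(0,1)$, and because these normalized variables all have variance $1$ they are uniformly integrable, so $\mathbb{E}|(S_i-\tfrac i2)/\sqrt{i/4}|\to\mathbb{E}|Z|=\sqrt{2/\pi}$, giving the same conclusion.

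For the stated generalization to unfairly weighted coins I would rerun the reduction with $P_{true}[0]=p$: the same cancellation gives $L_i=|P_i[0]-p|$, now with $S_i\sim\mathrm{Binomial}(i,p)$, and the binomial mean absolute deviation satisfies $\mathbb{E}|S_i-ip|\sim\sqrt{2ip(1-p)/\pi}$, so $\mathbb{E}[L_i]\sim\sqrt{2p(1-p)/(\pi i)}$ --- the same $-\tfrac12$ power-law exponent, specializing to $1/\sqrt{2\pi i}$ at $p=\tfrac12$. An analogous reduction covers the $L^2$-norm and absolute-KL variants mentioned in the text, where the per-output loss is again a function of $S_i/i$.

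The main obstacle is the step converting ``the binomial is approximately Gaussian'' into a statement about $\mathbb{E}|S_i-\tfrac i2|$: convergence in distribution by itself does not transfer to convergence of the first absolute moment, so one must either establish uniform integrability (immediate here from the constant normalized variance) or, cleaner, bypass the limit theorem entirely using de Moivre's identity plus Stirling, which also delivers the sharp constant directly. I would take the latter route; the remaining pieces --- the loss collapse, identifying $S_i$ as binomial, and the biased-coin extension --- are routine bookkeeping.
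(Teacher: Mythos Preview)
Your proof is correct and follows essentially the same route as the paper: both reduce $\mathbb{E}[L_i]$ to the mean absolute deviation of a symmetric binomial, obtain the closed form $\frac{1}{2^{i+1}}\binom{i}{i/2}$ for even $i$ (with $\mathbb{E}[L_i]=\mathbb{E}[L_{i+1}]$ for $i$ even), and finish with Stirling. The only differences are cosmetic: you invoke de Moivre's identity for the binomial MAD directly, whereas the paper rederives that identity by enumerating the $2^i$ training sets and manipulating partial binomial sums; and you supply a CLT-plus-uniform-integrability alternative and the biased-coin extension, which the paper mentions but does not spell out.
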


\begin{proof}
  First, we enumerate the $2^i$ possible ordered samples $T_i$, and we name them uniquely as $T_{i,j}$ for $j = 0,1,\ldots,2^i-1$. Let $P_{i,j}$ be the probability distribution predicted by a counting model trained with the set $T_{i,j}$.

  Now, we can expand the expectation as a sum:
  \begin{equation*}
    \begin{split}
      \mathbb{E}[L_i] & = \sum_{j=0}^{2^i-1} \left[ P[obtaining~T_{i,j}] * L_{i,j} \right] \\
      & = \sum_{j=0}^{2^i-1} \left[ P[obtaining~T_{i,j}] * \sum_{x\in\mathcal{X}} l(P_{i,j}[x], P_{true}[x]) * P_{true}[x] \right] \\
      & = \sum_{j=0}^{2^i-1} \left[ P[obtaining~T_{i,j}] * \sum_{x\in\mathcal{X}} |P_{i,j}[x] - P_{true}[x]| * P_{true}[x] \right] \\
    \end{split}
  \end{equation*}
  Exploiting the symmetry of the fair coin flip and plugging in values for $P_{true}$, we can simplify this to:
  \begin{equation*}
    \begin{split}
      \mathbb{E}[L_i] & = \frac{1}{2^i} \sum_{j=0}^{2^i-1} |P_{i,j}[x] - P_{true}[x]| \\
    \end{split}
  \end{equation*}
  We note that $T_{i,j} = T_{i,k}$ for $j \neq k$ iff the $j$th and $k$th samples each contain the same number of instances of $x\in\mathcal{X}$. In that case, $\forall x, P_{i,j}[x] = P_{i,k}[x]$. Further, note that there are ${i\choose{k}}$ sets, $T_{i,j}$, such that $|T_{i,j}[x]| = k$. We apply this counting argument to calculate the number $T_{i,j}$ that are equal. Let $k$ be the number of instances of $x=0$ in each set of training sets:
  \begin{equation*}
    \begin{split}
      \mathbb{E}[L_i] & = \frac{1}{2^i} \sum_{k=0}^{i} {i\choose{k}} \left\lvert\frac{i-k}{i} - \frac{1}{2}\right\rvert \\
      & = \frac{1}{2^i} \sum_{k=0}^{i} {i\choose{k}} \left(\frac{i-k}{i} - \frac{1}{2}\right) \\
      & = \frac{2}{2^i} \sum_{k=0}^{\lfloor\frac{i+1}{2}\rfloor} {i\choose{k}} \left(\frac{1}{2} - \frac{k}{i}\right) \\
      & = \frac{2}{2^i} \sum_{k=0}^{\lfloor\frac{i+1}{2}\rfloor} {i\choose{k}} \left(\frac{1}{2} - \frac{k}{i}\right) \\
      & = \frac{2}{2^i} \left[\frac{1}{2} \sum_{k=0}^{\lfloor\frac{i+1}{2}\rfloor} {i\choose{k}} - \sum_{k=0}^{\lfloor\frac{i+1}{2}\rfloor} \frac{k}{i} {i\choose{k}} \right] \\
      & = \frac{2}{2^i} \left[\frac{1}{2} \sum_{k=0}^{\lfloor\frac{i+1}{2}\rfloor} {i\choose{k}} - \sum_{k=0}^{\lfloor\frac{i-1}{2}\rfloor} {{i-1}\choose{k}} \right] \\
      & = \left\{\begin{array}{ll}
        \frac{1}{2^{i+1}} {i\choose{\frac{i}{2}}} & i~$even$ \\
        \frac{1}{2^i} {{i-1}\choose{\frac{i-1}{2}}} & i~$odd$ \\
      \end{array}\right.
    \end{split}
  \end{equation*}
  These last steps use the observation that summing half of a set of binomial coefficients gives roughly half of $2^i$:
  \begin{equation*}
    \sum_{k=0}^{\lfloor\frac{i+1}{2}\rfloor} {i\choose{k}} = \left\{\begin{array}{ll}
      2^{i-1} & i~$odd$ \\
      2^{i-1} - \frac{1}{2} {i\choose{\frac{i}{2}}} & i~$even$ \\
    \end{array}\right.
  \end{equation*}
  At this point, note that for $i$ even, we have that $\mathbb{E}[L_i] = \mathbb{E}[L_{i+1}]$. Thus, to bound $\mathbb{E}[L_i]$, it suffices to show that it is bounded for $i$ even.

  Finally, we use Sterling's factorial approximation, $i! = \Omega(\sqrt{2\pi i}\left(\frac{i}{e}\right)^i)$, to provide the desired bound:
  \begin{equation*}
    \begin{split}
      \mathbb{E}[L_i] = \frac{1}{2^{i+1}} {i\choose{\frac{i}{2}}} & = \Omega\left(\frac{\sqrt{2\pi i}(i/e)^i}{2^{i+1}\left(\sqrt{2\pi (i/2)}\left(\frac{i/2}{e}\right)^{i/2}\right)^2}\right) \\
      & = \Omega\left(\frac{(i/e)^i}{2^i\sqrt{2\pi i}\frac{1}{2^i}\left(\frac{i}{e}\right)^i}\right) \\
      & = \Omega\left(\frac{1}{\sqrt{2\pi}}i^{-0.5}\right)
    \end{split}
  \end{equation*}
\end{proof}

\end{document}